\newtheorem{lemma}{Lemma}
\newtheorem{theorem}{Theorem}
\title{Navigating Noisy Feedback: Enhancing Reinforcement Learning with Error-Prone Language Models}
\author{
 \textbf{Muhan Lin\textsuperscript{1}},
 \textbf{Shuyang Shi\textsuperscript{1}},
 \textbf{Yue Guo\textsuperscript{1}},
 \textbf{Behdad Chalaki\textsuperscript{2}},
\\
 \textbf{Vaishnav Tadiparthi\textsuperscript{2}},
 \textbf{Ehsan Moradi Pari\textsuperscript{2}},
 \textbf{Simon Stepputtis\textsuperscript{1}},
 \textbf{Joseph Campbell \textsuperscript{1}},
\\
 \textbf{Katia Sycara\textsuperscript{1}}
\\
\\
 \textsuperscript{1}Carnegie Mellon University,
 \textsuperscript{2}Honda Research Institute USA
\\
 \small{
\textbf{Correspondence:} \href{mailto:email@domain}{muhanlin@cs.cmu.edu}
 }
}
\begin{document}
\maketitle
\begin{abstract}
The correct specification of reward models is a well-known challenge in reinforcement learning.
Hand-crafted reward functions often lead to inefficient or suboptimal policies and may not be aligned with user values.
Reinforcement learning from human feedback is a successful technique that can mitigate such issues, however, the collection of human feedback can be laborious.
Recent works have solicited feedback from pre-trained large language models rather than humans to reduce or eliminate human effort, however, these approaches yield poor performance in the presence of hallucination and other errors.
This paper studies the advantages and limitations of reinforcement learning from large language model feedback and proposes a simple yet effective method for soliciting and applying feedback as a potential-based shaping function.
We theoretically show that inconsistent rankings – which approximate ranking errors – lead to uninformative rewards with our approach. Our method empirically improves convergence speed and policy returns over commonly used baselines even with significant ranking errors, and eliminates the need for complex post-processing of reward functions.
\end{abstract}

\section{Introduction}

The correct specification of task rewards is a well-known challenge in reinforcement learning (RL)~\cite{leike2018scalable}.
Complex tasks often necessitate complex, nuanced reward models, particularly as shaping terms may be required to guide exploration.
However, hand-crafting these reward functions is difficult and often leads to a phenomenon known as \textit{reward hacking}, wherein an agent learns to exploit a reward function for increased returns while yielding unexpected or undesired behavior~\citep{skalse2022defining}.
Reward hacking is symptomatic of the broader challenge of \textit{value alignment}, in which it is difficult for a human domain expert to fully and accurately specify the desired behavior of the learned policy in terms of a reward function.

In this study, we aim to eliminate the dependence on handcrafted reward functions by training agents with reward functions derived from data.
A notable method in this domain is reinforcement learning from human feedback (RLHF), where policy trajectories are ranked by humans.
These rankings are then used to learn a reward function which guides model training and facilitates value alignment.
This process is extremely costly in terms of human effort, however, requiring a significant number of rankings to train accurate reward models~\cite{casper2023open}. 

We can avoid the need for humans-in-the-loop by instead generating rankings with pre-trained large language models (LLMs) in a process known as reinforcement learning with AI feedback (RLAIF)~\citep{lee2023rlaif, bai2022constitutional, kim2023aligning}.
However, LLMs are well known to hallucinate and present false information as fact~\citep{zhang2023siren}, which reduces the accuracy and reliability of the resulting rankings.
This is often overcome through complex reward post-processing techniques, which may be task-specific and difficult to tune~\cite{klissarov2023motif}.

In this work, we propose a simple and effective strategy for reinforcement learning in the face of unreliable LLM feedback.
The core idea underlying our approach is to issue uninformative rewards for states in which the LLM is uncertain.
Thus, we avoid issuing potentially misleading rewards which allows us to train performant policies even in the face of significant ranking errors.
Building off the insight that certainty in language models is expressed through output consistency~\citep{tanneru2024quantifying}, we show that rewards issued from a potential-based scoring function learned over repeated rankings naturally reflect an LLM's uncertainty.

Our contributions are as follow, we 1) introduce a methodology for incorporating noisy LLM feedback into RL which out-performs prior SOTA; and 2) provide theoretical and empirical analysis showing that uncertain LLM outputs -- as given by inconsistent responses -- lead to uninformative rewards which improve convergence speed and policy returns in experiments. The codes of this work can be accessed \href{https://github.com/sy-shi/RLAIF\_ScoreDiff.git}{here}.

\begin{figure*}
     \centering
     \begin{minipage}[b]{0.61\textwidth}
         \centering
         \vspace{-10pt}
         \includegraphics[width=1\linewidth]{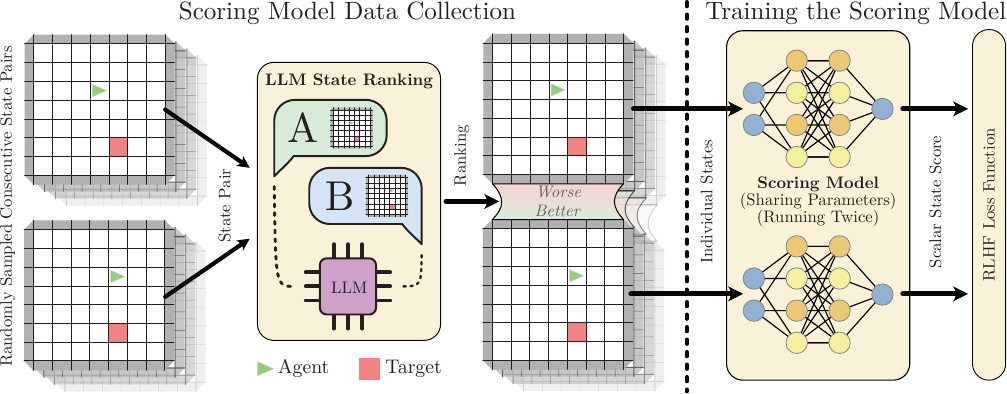}
         \caption*{(a) Training the scoring model from LLM-ranked state pairs.}
         \label{fig:overview-train-scoring}
     \end{minipage}
     \hfill
     \begin{minipage}[b]{0.37\textwidth}
         \centering
         \vspace{-10pt}
         \includegraphics[width=1\linewidth]{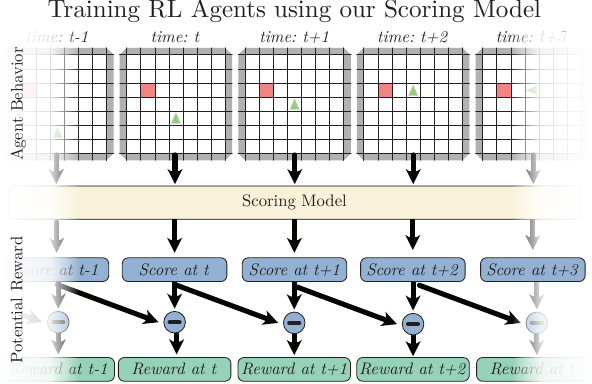}
         \caption*{(b) Agent training using the scoring model.}
         \label{fig:overview-train-agent}
     \end{minipage}
     \caption{Our approach: (a) We train our scoring model with randomly sampled \textit{consecutive} state pairs, which are ranked by an LLM with respect to task completion. The resulting dataset of ranked state pairs is utilized in an RLHF fashion to train a single scoring model, capable of providing a score for any novel state. (b) Using the scoring model, an RL agent is trained by scoring each state. Prior work uses this score as a reward; however, our approach utilizes the score differences as a potential reward.}
\end{figure*}

\vspace{-10pt}

\section{Related Works}

Constructing rewards based on human feedback has a long history \citep{thomaz2006reinforcement}. To efficiently use human domain knowledge and provide more generalizable rewards, human preference on episode segments \citep{sadigh2017active, christiano2017deep, biyik2019asking} and human demonstrations \citep{biyik2022learning} are distilled into models which serve as reward functions for RL. The method has witnessed great success in complex domains where rewards are difficult to specify such as training large language models (LLM) to align with human logic and common sense \citep{ziegler2019fine,ouyang2022training}.

One major drawback of RLHF is its requirement of exhaustive human participation to provide demonstrations and feedback. 
LLMs have shown deductive logic abilities comparable to humans in recent years \citep{du2023guiding}, and are able to substitute humans in reward issuing \citep{kwon2023reward,yu2023language,lee2023rlaif, xie2023text2reward}, or data collection and labeling for reward model training \citep{lee2023rlaif, klissarov2023motif}. 
While the former suffers from time and resource costs for training RL agents, the latter is becoming promising for training complex RL tasks \citep{wang2024rl}.

An outstanding challenge with leveraging LLM-based feedback is that the performance of RLHF is dependent on the quality of feedback received~\cite{casper2023open}.
Different LLMs have distinct probabilities of giving wrong feedback, thus leading to rewards of varying quality.
\citet{casper2023open} also suggests that comparison-based feedback may not be efficient and adequate to train reward models with noisy LLM outputs.
In this work, we analyze the training performance of reinforcement learning agents across various LLMs, each of which produce different error distributions in feedback.

Another challenge is that of the reward formulation itself. Many works train a model distilling LLM or human preference and use it as the reward model \citep{christiano2017deep, wang2024rl, klissarov2023motif, lee2023rlaif}, but in practice, this needs post-processing on outputs of the reward model such as filtering \citep{klissarov2023motif}, and normalization \citep{christiano2017deep}.
Our work posits that a reward function trained without complex post-processing and environment rewards would be more general and adaptable to various practical scenarios.

\vspace{-10pt}

\section{Background}
\textbf{Reinforcement Learning:} 
In reinforcement learning, an agent interacts with an environment and receives a reward for its current action at each time step, learning an optimal action policy to maximize the rewards over time. This procedure can be formulated as an infinite horizon discounted Markov Decision Process (MDP) \citep{sutton2018reinforcement}.

At each discrete timestep $t$ in this process, the agent observes environment state $s_t$ and takes action $a_t$, leading to the next environment state $s_{t+1}$ and a reward $r_t$.  An MDP is represented as a tuple $(\mathcal{S}, \mathcal{A}, \mathcal{R}, \mathcal{T}, \gamma)$, where $\mathcal{S}$ is a set of states, $\mathcal{A}$ is a set of actions, $\mathcal{R}: \mathcal{S} \mapsto \mathbb{R}$ is a reward function, $\mathcal{T}(s,a,s') = P(s' | s,a)$ is a transition function, and $\gamma$ is a discount factor. A stochastic policy $\pi(a|s): \mathcal{A} \times \mathcal{S} \mapsto [0, 1]$ indicates the probability that the agent selects action $a$ given the state $s$. The agent's goal is to learn $\pi$ maximizing the expected discounted return through training, given an initial state distribution.

\noindent\textbf{Preference-based Reinforcement Learning:} Our work is based on the framework of preference-based reinforcement learning, where the reward function is learned from preference labels over agent behaviors \citep{christiano2017deep, ibarz2018reward, lee2021pebble, lee2021b}. For a pair of states $(s_a, s_b)$, an annotator gives a preference label $y$ that indicates which state is ranked higher, considering which state is closer to the given task goal. The preference label $y \in \{0,1\}$, where 0 indicates $s_a$ is ranked higher than $s_b$, and 1 indicates $s_b$ is ranked higher than $s_a$. Given a parameterized state-score function $\sigma_\psi$, which is commonly called the potential function and usually equated with the reward model $r_\psi$, we compute the preference probability of a state pair with the standard Bradley-Terry model \citep{bradley1952rank},
\begin{equation}
\begin{aligned}
P_\psi[s_b \succ s_a] &= \frac{\exp \left(\sigma_\psi(s_b) \right)}{\exp \left(\sigma_\psi(s_a) \right) + \exp \left(\sigma_\psi(s_b) \right)}\\
&= sigmoid(\sigma_\psi(s_b) - \sigma_\psi(s_a)),
\end{aligned}
\end{equation}
where $s_b \succ s_a$ indicates $s_b$ is ranked higher than the state $s_a$. With a preference dataset $D={(s_a^i, s_b^i, y^i)}$, preference-based RL learns the state-score model $\sigma_\psi$ by minimizing the cross-entropy loss, which aims to maximize the score difference between the high and low states:

\begin{equation}
\begin{aligned}
\label{eq:loss}
    \mathcal{L} &= -\mathbb{E}_{(s_a, s_b, y) \sim \mathcal{D}} \bigg[ \mathbb{I}\{y = (s_a \succ s_b)\} \\ &\log P_\psi[s_a \succ s_b] + \mathbb{I}\{y = (s_b \succ s_a)\} \\& \log P_\psi[s_b \succ s_a]\bigg].
\end{aligned}
\end{equation}

This framework is used in both RLHF and RLAIF where rewards are issued directly from the state-score model and differ only in the choice of annotator, i.e., human or LLM.

\section{Methodology}
Despite the success of LLMs in few-shot task generalization,
these models are imperfect and yield sub-optimal performance in many areas.
One notable issue is the well-documented tendency of LLMs to hallucinate, which results in LLM-generated preference rankings frequently containing errors (see Table~\ref{tab:llm variants}).
These errors present major challenges for reinforcement learning from LLM feedback, as they result in noise in the learned score function.
Under the standard RLHF formulation where rewards are directly issued from the score function \citep{christiano2017deep}, this can lead to inefficient exploration at best and, at worst, trap the agent in sub-optimal local minima.
\subsection{Quantifying Feedback Error through Output Consistency}
\label{theoretical_analysis}
It has been shown that the certainty of LLM predictions can be quantified by issuing the same query multiple times and measuring the \textit{consistency} of the predictions~\cite{lyu2024calibrating}. Specifically, the confidence of ranking $s_a$ higher than $s_b$, $conf\{y = (s_a \succ s_b)\}$, is defined as $\frac{N(s_a \succ s_b)}{N_{query}(s_a, s_b)}$. where $N(s_a \succ s_b)$ denotes the number of times LLM ranks $s_a$ higher than $s_b$, and $N_{query}(s_a, s_b)$ denotes the total number of queries on $s_a$ and $s_b$. Confidence is a necessary condition to consider when evaluating LLM feedback quality, given that low confidence often causes considerable noise in feedback which manifests as incorrect rewards.
Based on the definition of feedback confidence, we derive an equivalent form of the RLHF loss based on ranking confidence and consistency as follows:

\begin{equation}
\begin{aligned}
    \mathcal{L} &= -\mathbb{E}_{(s_a, s_b, y) \sim \mathcal{D}} \bigg[ \mathbb{E}_{N_{query}} \Big[ \mathbb{I}\{y = (s_a \succ s_b)\} \\ &\log P_\psi[s_a \succ s_b] + \mathbb{I}\{y = (s_b \succ s_a)\} \\& \log P_\psi[s_b \succ s_a] \Big] \bigg]\\
    &= -\mathbb{E}_{(s_a, s_b, y) \sim \mathcal{D}} \bigg[ conf\{y = (s_a \succ s_b)\} \\ & \log (sigmoid(\sigma_\psi(s_a) - \sigma_\psi(s_b))) + \\& 
    conf\{y = (s_b \succ s_a)\} \\& \log (sigmoid(\sigma_\psi(s_b) - \sigma_\psi(s_a))) \bigg].
\end{aligned}
\label{eq:conf_rlhf}
\end{equation}
This loss function uses confidence-based weights to relate the scores between each state in ranked pairs.
From this derivation, we see the following.
1) A more confident ranking produces a larger score difference between the ranked states, i.e., the magnitude of the score difference is proportional to the confidence.
Formally, if $conf\{y = (s_a \succ s_b)\}$ > $conf\{y = (s_b \succ s_a)\}$ then $\sigma_\psi(s_a) - \sigma_\psi(s_b)$ > 0.
In the event the LLM is perfectly confident, $conf\{y = (s_a \succ s_b)\} = 1$, then the loss function will maximize $\sigma_\psi(s_a) - \sigma_\psi(s_b)$. 
2) As the confidence \textit{decreases}, then $|conf\{y = (s_a \succ s_b)\} - conf\{y = (s_b \succ s_a)\}|$ converges to $0$.
When the LLM is completely uncertain then $conf\{y = (s_a \succ s_b)\} = conf\{y = (s_b \succ s_a)\}$ and the loss function will minimize $|\sigma_\psi(s_a) - \sigma_\psi(s_b)|$, resulting in identical state scores such that $\sigma_\psi(s_a) = \sigma_\psi(s_b)$. The formal proof is given in Appendix \ref{formal_proof}.

\subsection{Potential-based Rewards for Learning with Noisy Feedback}

The above observations stemming from Eq.~\ref{eq:conf_rlhf} motivate the form of our proposed method.
Intuitively, when the LLM is completely uncertain when ranking $s_a$ and $s_b$ then the difference between their scores is zero.
This is ideal, as \textbf{when the LLM is unable to issue an accurate ranking then we would like it to issue an uninformative reward}, i.e., a reward of zero.
Our solution is to treat the state-score as a \textit{potential function}, defining the reward as the difference between the scores of successive state pairs:

\begin{equation}
\label{eq:rf}
    r(s_t) = \sigma_\psi(s_{t}) - \sigma_\psi(s_{t-1}).
\end{equation}

Thus, the more uncertain an LLM's ranking is, the less informative the reward is.
The potential in Eq.~\ref{eq:rf} is naturally shaped to a proper scale range, with positive rewards for actions that are beneficial and promising to
the given task goal and negative rewards for detrimental actions.
Large values correspond to more confident rankings, while small ones to less confident rankings.
As such, our approach is particularly well-suited to RLAIF with smaller, specialized models which are often necessary in resource-constrained environments.

There is an additional benefit to this formulation.
Prior works treat the state-score function as a reward function and directly issue rewards from it, which we call the ``direct-reward'' method.
This often leads to training instability as the rewards may have significant differences in scale,  
which need to be corrected via post-processing techniques such as normalization and thresholding as well as extrinsic per-step reward penalties.
However, the performance of post-processed direct rewards is highly sensitive to these hyper-parameters, as they are often task-specific.
Our potential difference formulation helps alleviate this issue as 1) uncertain states converge to the same score value so the impact of noisy rankings no longer needs to be mitigated through post-processing, and 2) per-step penalties can be discarded in favor of simple timestep-based discounting which are far less sensitive.

\subsection{Algorithm}

Our algorithm consists of the following four steps:
1) Randomly sample pairs of sequential states from the environment.
2) Query the LLM to rank states in each pair with respect to a natural language task description, e.g., ``Go to the green goal''. The prompt contains a language task description, environment description, and in-context learning examples \citep{wei2022chain} as context to generate preference labels for states in each pair. 
3) Train the state-score model $\sigma_\psi$ with the loss function in Eq. \ref{eq:loss}.
4) Train a reinforcement learning agent with feedback from the state-score model.

\section{Performance Analysis of Potential-Difference Rewards}

We evaluate our approach in commonly-used discrete (Grid World) and continuous (MuJoCo)~\citep{brockman2016openai} benchmark environments.
Throughout these experiments, we investigate the effectiveness of our potential-based reward function a) as compared to using the score as a reward directly in both single and multi-query settings; and b) its sensitivity to inconsistency in state rankings.

\subsection{Experiment Setup}
\label{sec:exp_setup}
\noindent\textbf{Grid World.} We examine three scenarios within Grid World \cite{swamy2024minimaximalist}: \textbf{NoLock}, \textbf{Lock}, and \textbf{MultiLock}. The layouts are shown in Fig.~\ref{fig:environment}. 
In each scenario, the agent (green triangle) must navigate to the target (green rectangle). 
There are one and two locked doors in the Lock and MultiLock variants, respectively, that block the agent's way to the goal. 
To unlock each door the agent must pick up the appropriate key and use it on the door. The agent, goal, and key positions are randomly initialized in every episode.

\begin{figure}[H]
\centering
  \includegraphics[width=0.9\columnwidth]{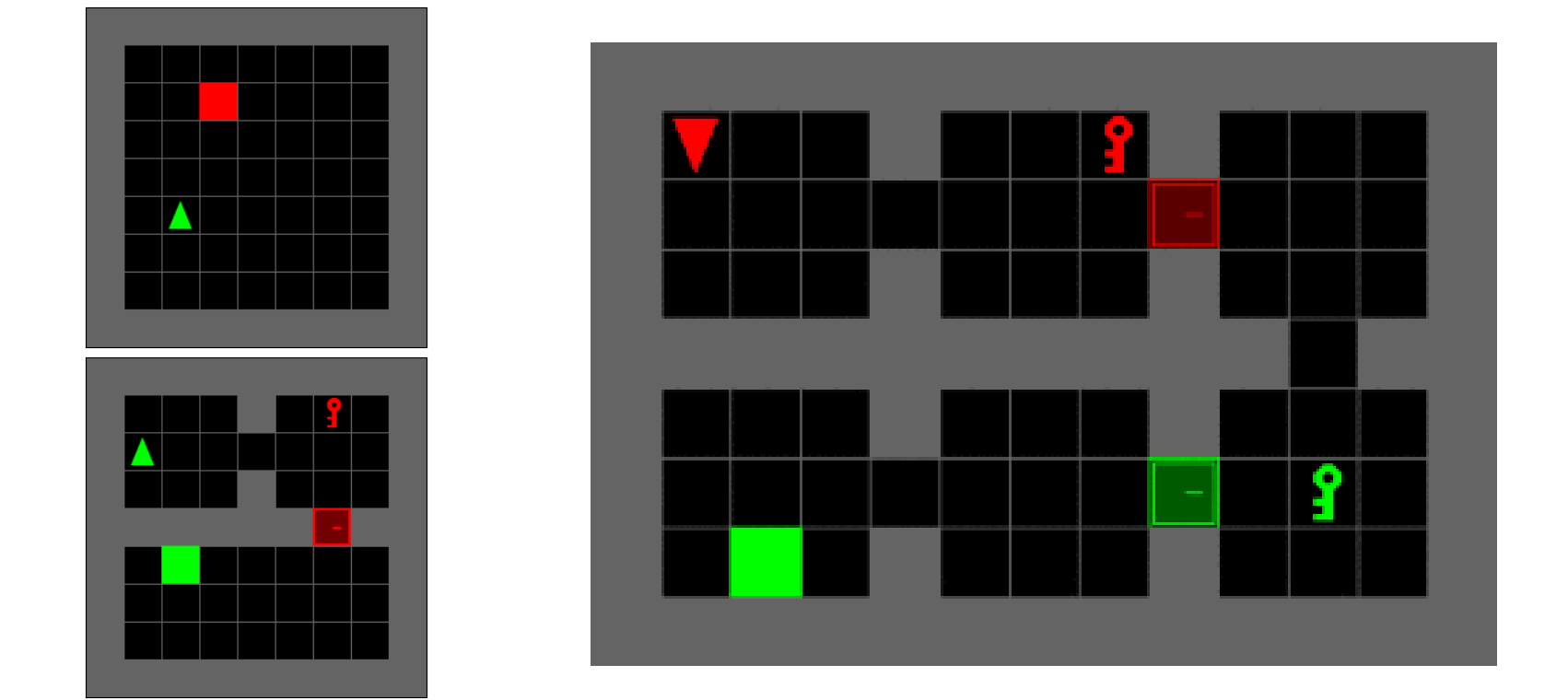} 
  \caption {Grid world environments with NoLock (upper-left), Lock (lower-left), and MultiLock (right) variants from left to right.}
  \label{fig:environment}
\end{figure}

\noindent\textbf{MuJoCo.} We examine a subset of robot control tasks selected from the simulated robotics benchmark MuJoCo~\cite{todorov2012mujoco}.
We choose 3 tasks with increasing degrees of complexity: \textbf{Inverted Pendulum}, \textbf{Reacher}, and \textbf{Hopper}.

For each of these six environments, we compare our approach with the following baselines:
\begin{itemize}
    \item {\bf Direct reward} directly utilizes the trained state-score functions' score as reward; i.e., $r(s) = \sigma_\psi(s)$. Following \citet{christiano2017deep}, the reward is normalized to zero-mean with a standard deviation of 1.
    \item {\bf Default reward} utilizes the vanilla RL objective of each environment with human-specified reward functions. 
    In grid world variants, the default reward is defined as $0$ for failure and $1-0.99n/n_{max}$ otherwise when the agent reaches the goal, where $n_{max}$ is the maximum time steps for each episode and $n$ is the step count until success. 
    For MuJoCo tasks, the default rewards are specified as those defined in OpenAI Gym \citep{brockman2016openai}. 
\end{itemize}

In each environment, we randomly sample pairs of sequential states from the environment with replacement in order to generate rankings for training the state-score model used by both potential difference and direct reward.
For single-query experiments, Grid World uses 2500, 3500, and 6000 samples for NoLock, Lock, and MultiLock respectively while MuJoCo uses 1000 samples for each environment.

Without loss of generality, we employ PPO as the underlying policy-training framework \citep{schulman2017proximal} and make the following assumptions: 
a) the environment is fully observable;
and b) the agent has no knowledge of the task before training, i.e., is randomly initialized.

\begin{figure*}[t]
\centering
  \includegraphics[width=0.9\linewidth]{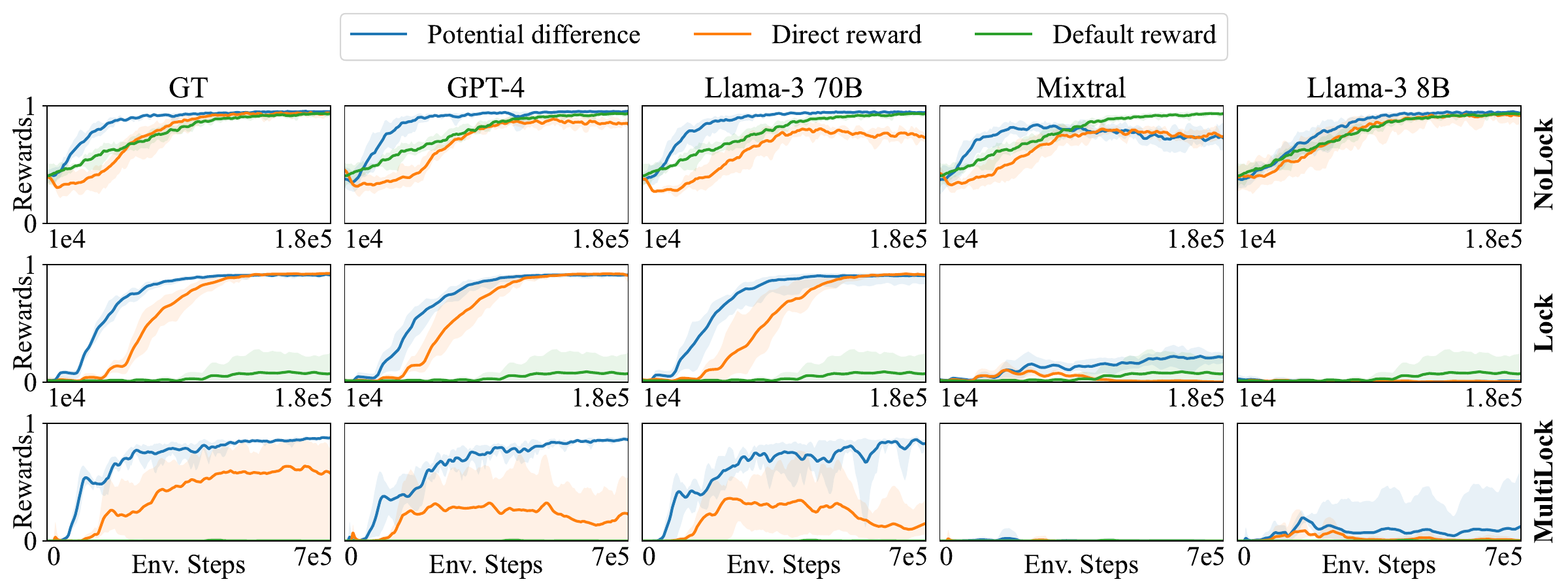} 
  \caption {The average learning curves with reward functions trained from single LLM queries in the Grid World environments over 5 random seeds, with the return variance visualized as shaded areas.}
    \label{fig:gridworld_single}
\end{figure*}
\begin{figure*}[t]
\centering
  \includegraphics[width=0.9\linewidth]{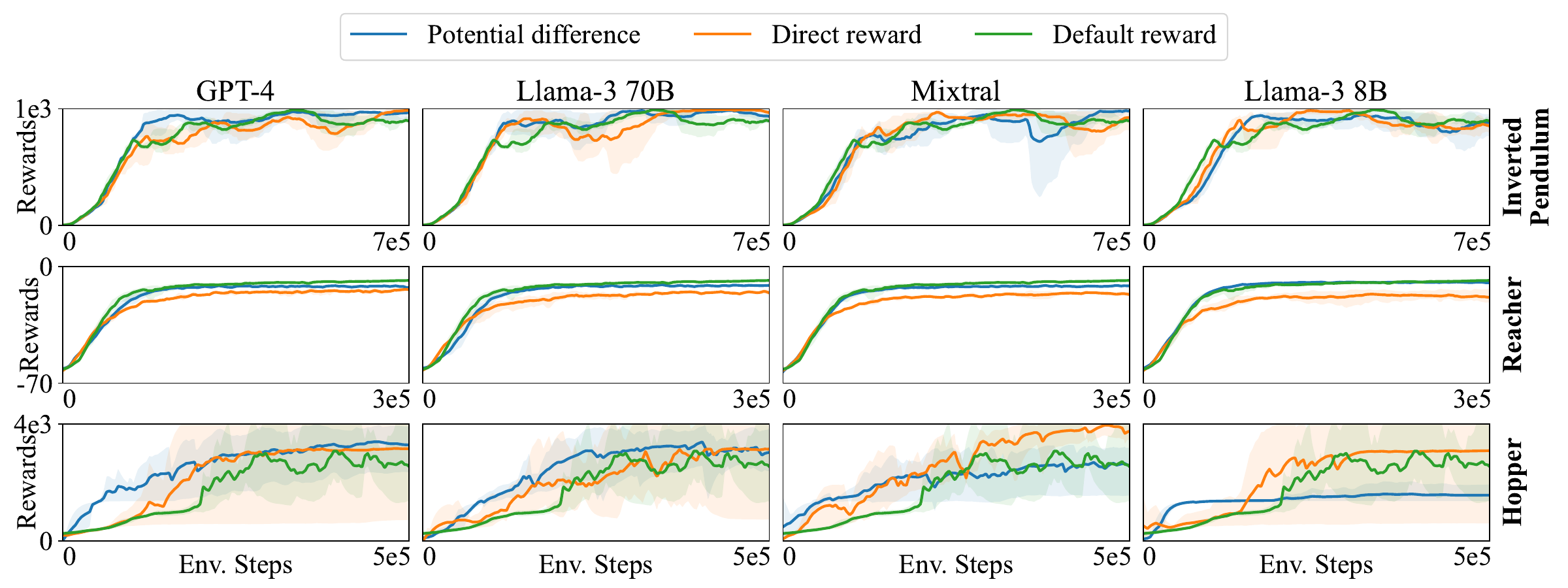} 
  \caption {The average learning curves with reward functions trained from single LLM queries in the MuJoCo environments over 5 random seeds, with the return variance visualized as shaded areas.}
    \label{fig:mujoco_single}
\end{figure*}

\subsection{LLM Ranking Performance}
\begin{table}[htbp]
    \setlength{\tabcolsep}{1.5pt}
    \footnotesize
    \centering
    {\renewcommand{\arraystretch}{1.2}
    
    \begin{tabular}{llccccc}
        \toprule
        Env. & Mthd. & \multicolumn{1}{c}{GT} & \multicolumn{1}{c}{Llama-3 8B} & \multicolumn{1}{c}{Mixtral} & \multicolumn{1}{c}{Llama-3 70B} & \multicolumn{1}{c}{GPT-4}
        \\ \hline
        \multirow{2}{*}{\rotatebox{90}{\parbox{2.7em}{\centering No\\Lock}}} \hspace{0.5em} & Rank  & 1.0 & 0.69 & 0.76 & 0.93 & 1.0 \\
        & Score  & 1.0 & 0.77 & 0.89 & 0.98 & 1.0 \\
        \hline
        \multirow{2}{*}{\rotatebox{90}{\parbox{2.7em}{\centering Lock}}} & Rank  & 1.0 & 0.54 & 0.65 & 0.89 & 0.98 \\
        & Score  & 1.0 & 0.55 & 0.74 & 0.97 & 0.98 \\
        \hline
        \multirow{2}{*}{\rotatebox{90}{\parbox{2.7em}{\centering Multi\\Lock}}} & Rank  & 1.0 & 0.58 & 0.60 & 0.90 & 0.99 \\
        & Score  & 1.0 & 0.66 & 0.66 & 0.96 & 0.99 \\
        \bottomrule
    \end{tabular}
    }
    \caption{Ranking accuracy for each LLM across 1000 state pairs sampled from each environment. Rank indicates the direct ranking performance of the LLMs and Score indicates the ranking performance of the trained score models. Given that the ground-truth ranking are only accessible in grid world environments, we only show the ranking correctness of LLMs and state-score models in these three environments.}
    \label{tab:llm variants}
\end{table}

We first quantify the performance of four different LLM models used in this work over each Grid World environment.
After sampling pairs of sequential states as discussed in Sec.~\ref{sec:exp_setup}, we measure the accuracy of a) the LLM's rankings and b) the resulting state-score model with respect to ground truth rankings.
The results, shown in Table~\ref{tab:llm variants}, provide us with an approximate ordering of LLM ranking performance, where GPT-4 $>$ Llama-3 70B $>$ Mixtral $>$ Llama-3 8B.

\begin{figure*}[t]
\centering
\vspace{-10pt}
  \includegraphics[width=0.9\linewidth]{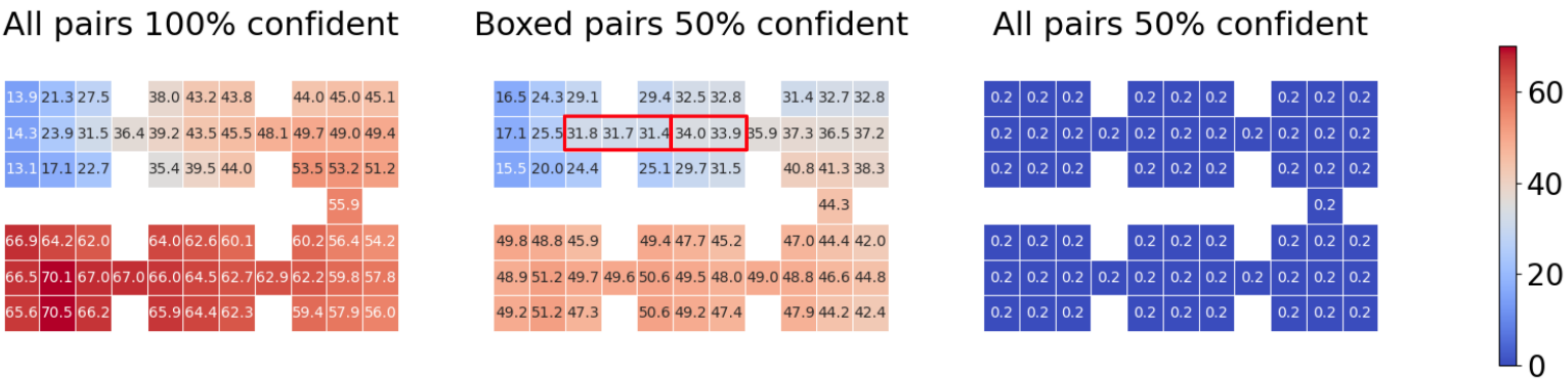} 
  \caption {The heat maps showing that feedback inconsistency pushes rewards towards 0. Each grid in the map shows the score of the state where the agent is at this grid. The first heat map shows state scores trained with 100\% confident rankings on all state pairs. The second heat map shows state scores trained with 100\% confident ranking on all state pairs except 50\% confident rankings on state pairs where the agent is in the red block. The third heat map shows state scores trained with 50\% rankings on all state pairs.}
    \label{fig:heatmaps}
\vspace{-10pt}
\end{figure*}
\subsection{Single-Query Evaluation}

We next examine how our approach performs compared to the standard direct reward approach commonly utilized in RLHF.
In each environment, we train our state score models with 4 LLMs: Mixtral~\citep{jiang2024mixtral}, GPT 4~\citep{achiam2023gpt}, and Llama-3 with 8B and 70B parameters ~\citep{touvron2023llama}. For Grid World environments, we add an additional baseline in which rankings are generated using a ground truth heuristic function (GT) which serves as an upper bound for our methods.

The state score models are trained by minimizing the loss in Eq. \ref{eq:loss}. 
Then they are employed to train 5 RL policies with random seeds and initializations for each method. 
As a common approach to avoid reward hacking, a constant step penalty  is applied to the produced rewards from both methods in all environments except for MuJoCo Reacher, which exploits a torque penalty as described in \citet{brockman2016openai}. 
The results, as well as the default reward performance, are shown in Fig.~\ref{fig:gridworld_single} and Fig.~\ref{fig:mujoco_single}.

In Grid World environments, our method (in blue) of potential difference-based reward outperforms the direct reward method in most cases. 
When using GT, GPT-4, or Llama-3 70B rankings, our method converges the fastest and yields the highest final reward. 
For the environments of Lock and MultiLock (bottom two rows), the tasks are more challenging when using the default reward; however, our method remains on par, or outperforms the baseline with respect to convergence speed and final reward. However, when using LLMs which generate noisy outputs (i.e., Mixtral and Llama-3 8B), all methods fail to converge in the Lock and MultiLock environments. 
In Sec.~\ref{sec:multi-query}, we detail our approach of using multiple queries, particularly for low-performing LLMs, to regain training performance using our potential-based reward function.

In MuJoCo environments, reported in Fig.~\ref{fig:mujoco_single}, our potential-based reward method slightly outperforms (particularly in Hopper with GPT-4 and Llama-3 70B) or is on par with our baseline methods. 
Exceptions to this trend can be seen with low-performing LLMs (e.g., Llama-3 8b).
Our method outperforms direct reward in Reacher and achieves a performance similar to the well-crafted default reward function, showing that potential-difference rewards are better. 
However, direct reward outperforms ours when using low-performing LLMs, particularly Mixtral and Llama-3 8B.
We attribute this to the challenge of designing appropriate prompts based on human intuition, i.e.,
we prompt LLMs to compare the hopper's speed in two consecutive 
states because the hopper should learn to move forward without falling down.
However, these LLMs then encourage moving faster instead of simply moving forward. 
While this prompt could lead to a good reward for high-performing LLMs, low-performing LLMs could not handle such situations, and we hypothesize that this leads to sub-optimal training results.

\begin{figure}[htbp]
    \includegraphics[width=1\columnwidth]{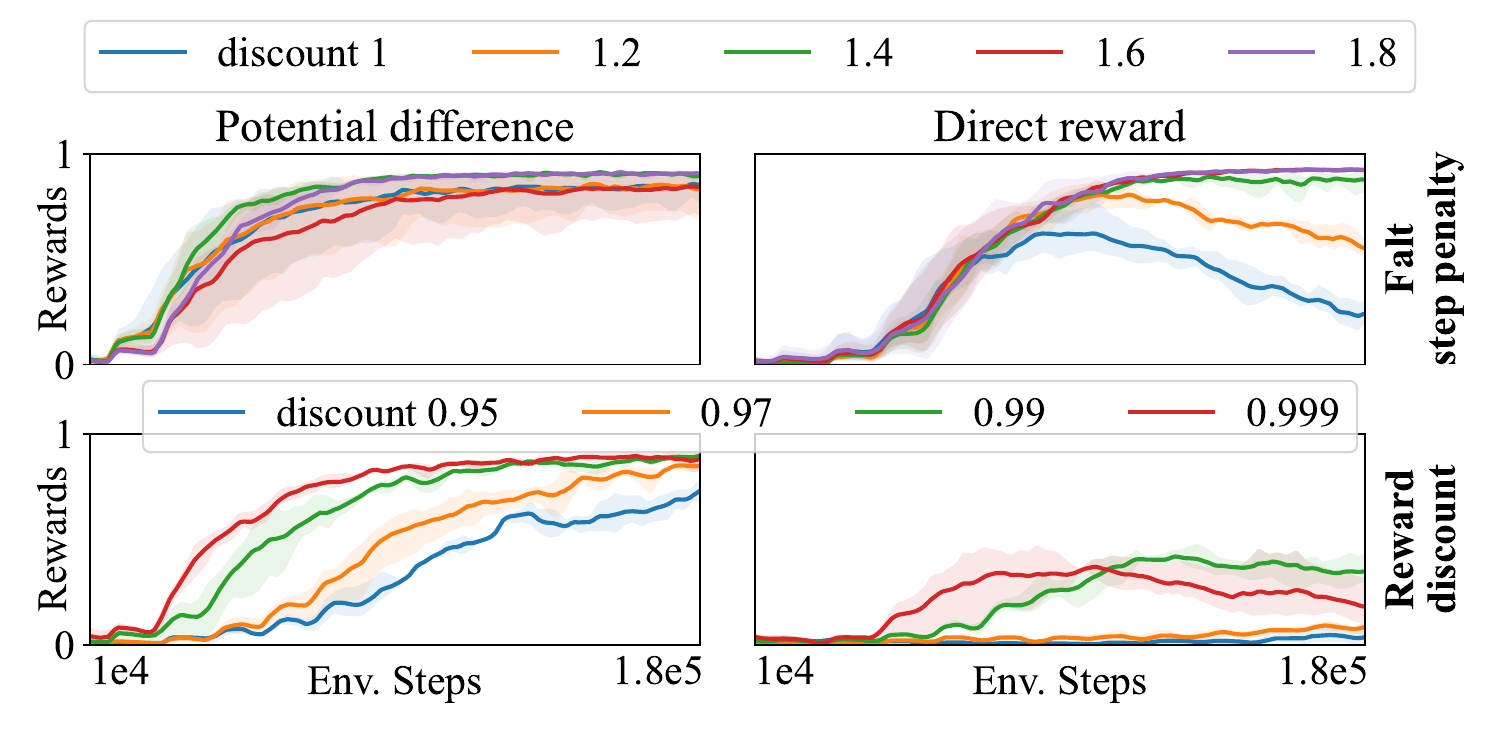}
    \caption{The average learning curves for rewards with multiple step penalties or discounts in the Grid World - Lock scenario, over 3 random seeds.}
    \label{fig:sensitivity_env2}
\vspace{-10pt}
\end{figure}

\subsubsection{Hyper-Parameter Sensitivity Analysis}

Since potential difference reward and direct reward suffer from reward hacking without post-processing, a step penalty is essential; however, choosing this value can be difficult. 
We conduct further experiments in the grid world Lock scenario to show that our method is less sensitive to step penalty than direct reward. 
Two penalty schemes with multiple parameters are tested: 1) \textbf{Flat Step Penalty}: A positive constant is subtracted at each time step. 2) \textbf{Reward Discount}: 
Reward for episode step $t$ is discounted by $\gamma_r^{t}$, where $\gamma_r < 1$ is a positive constant. 
We use the state score model trained from ground truth human heuristics for comparison. Each parameter is tested to train 3 RL policies with random seeds and initialization.
The results are shown in Fig. \ref{fig:sensitivity_env2}.

Our method shows robustness to the choice of the flat step penalty, as the curves of penalty variances are less divergent.
However, when using it as a direct reward, it can be seen that the performance is affected significantly with respect to the penalty, as many of them prevent the agent from converging.
The results also show that our method can perform well by picking a commonly-used discount factor such as 0.99, avoiding the burden of extensive hyperparameter tuning. 
However, using it as a direct reward requires further hyperparameter tuning.

\begin{figure*}[t]
\centering
  \includegraphics[width=1\linewidth]{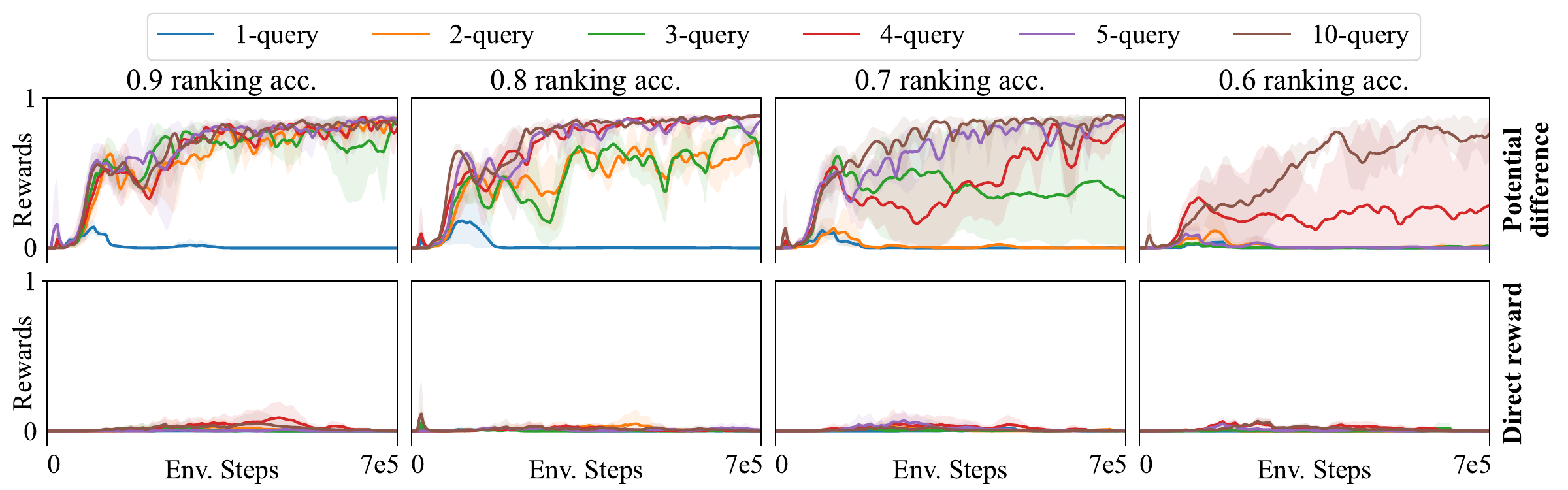} 
  \caption {The average learning curves with rewards trained from synthetic multi-query ranking datasets in the Grid World - MultiLock scenario, over 3 random seeds.}
    \label{fig:synthetic}
\end{figure*}
\begin{figure*}[t]
\centering
  \includegraphics[width=\linewidth]{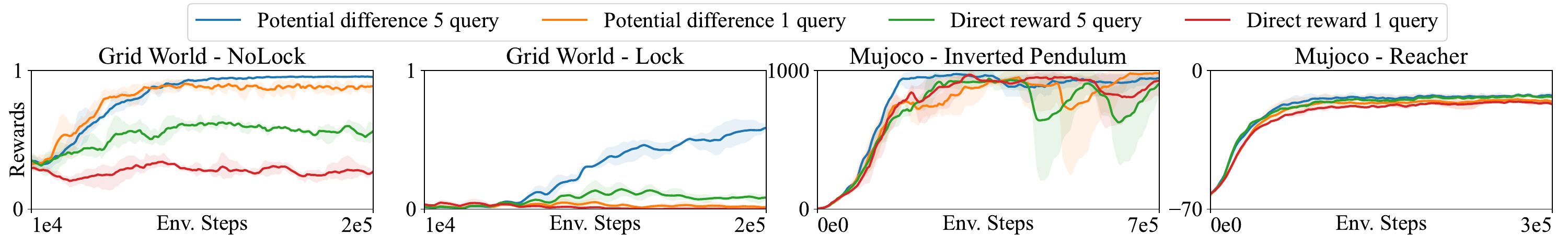} 
  \caption {The average-performance comparison of 5-query variation of potential-difference rewards and direct rewards with 1800, 2200, 1000, 1000 state pairs ranked with Mixtral, over 5 random seeds.}
    \label{fig:mixtral multi-query}
\end{figure*}
\subsection{Multi-Query Evaluation}
\label{sec:multi-query}
We introduce a multi-query approach that queries about ranking each state-transition case in the scoring-model training dataset a given number of times to address the rankings' inconsistency with lower-performing LLMs to push potential difference rewards toward zeros in the face of conflicting responses.
In Fig.~\ref{fig:heatmaps}, we illustrate the heat maps of state scores trained with datasets of distinct consistency degrees, demonstrated in the Grid World MultiLock environment.
Each grid in the heat maps records the score the scoring model assigns for an agent at that location.
The left sub-figure demonstrates the ideal case in which 100\% correct rankings are utilized to train the scoring model, demonstrating a smooth gradient 
from the start room (top left corner) to the final room (bottom left corner) roughly following the correct path.
However, if the scoring model is trained with 50\% confidence on all state pairs (right sub-figure in Fig.~\ref{fig:heatmaps}), the score in any state becomes equal as no adjacent states are ranked higher with high confidence. 
This demonstrates our method's ability to disregard states, and thus not provide rewards when LLM rankings are inconsistent across multiple queries. 
Finally, when the ranking results for a subset of states are inconsistent, yet consistent for all other locations (see Fig.~\ref{fig:heatmaps} center), 
the correct gradual change in score is maintained outside of the affected area. 
These results underline our method's capabilities with respect to the effects of pushing uncertain state scores toward zero while giving contrasting rewards to confident pairs, ultimately improving performance of our method with low-performing LLMs (see Sec.~\ref{sec:mq_eval}).

\subsubsection{Synthetic Ranking Evaluation}
\label{sec:mq_eval}
To test what ranking accuracy of datasets is needed for the LLM
with multi-query methods, and  how many queries are required, 
we synthesized training datasets with specific ranking accuracy from 60\% to 90\% and simulated query times from 1 to 10. 
State-score models trained with these datasets output rewards when training RL policies, and their performance is shown in Fig. \ref{fig:synthetic}. 
The specific ranking correctness rates are controlled by introducing random ranking errors into the ground-truth ranking datasets. 
This approach is repeated on several copies of the ground-truth datasets to simulate the multi-query ranking results. 

The result demonstrates that with more and more queries, the potential-difference reward gradually improves the training performance.
Two or more queries may achieve fast policy training converging towards optimal if using ranking datasets with high feedback consistency to train state-score models.
Notably, even for the
datasets of only 60\% ranking accuracy, which is close to random guessing, potential-difference rewards trained with enough queries can still increase the average policy training returns from 0 to an almost optimal level with 10 queries. 
This indicates that with enough queries, even the datasets with low-ranking accuracy can be boosted to function like those with high accuracy.
This finding is consistent with our theoretic analysis and demonstrates considerable potential in mitigating significant ranking errors.

\subsubsection{LLM Ranking Evaluation}
Observing that Mixtral has the highest inconsistency in ranking states and thus has the largest potential for improvement, we evaluate the 5-query variations of potential-difference rewards and direct rewards with ranking results from Mixtral to verify our claims.
Different methods' RL policy training curves averaged over 5 random seeds are compared in Fig. \ref{fig:mixtral multi-query}. 
As hypothesized, the 5-query potential-difference rewards achieve faster policy training and result in the highest rewards in all experiments. 
The single-query potential-difference rewards also outperform the single-query direct rewards. 
The improvement is most significant in Grid World - Lock scenario.
\section{Conclusions}
In this work, we propose a simple method for incorporating noisy LLM feedback into reinforcement learning.
Our approach is based on learning a potential-based score function over repeated LLM-generated preference rankings which issues uninformative rewards for states in which the LLM is uncertain.
We show both theoretically and empirically that this results in a natural trade-off between reward sparsity and LLM confidence in a variety of discrete and continuous environments.

\section{Limitations}
Our current analysis is limited to relatively simple discrete and continuous environments so that we could perform a thorough empirical evaluation.
However, the consequence of this is that several of the LLMs, e.g., GPT-4, perform exceptionally well when ranking which leaves limited room for improvement.
This is especially notable in the MuJoCo environments where our potential difference approach results in insignificant changes to performance.
On the other hand, smaller-parameter models such as Mixtral exhibit worse performance and as such benefit more from our approach (Fig.~\ref{fig:mujoco_single}) which is in-line with our synthetic experiments (Fig.~\ref{fig:mixtral multi-query}).
In the future, we wish to explore more complex, realistic environments which induce similar ranking errors in a larger set of language models. Our method is theoretically compatible with visual and multimodal environments that possess richer state and action spaces and local observations, which can be ranked by LLMs or VLMs. Exploring these scenarios will be the focus of our future work.
A further limitation is that we currently assume that sequential state pairs can be randomly sampled from the environment.
While this is true in most simulated environments, this assumption is violated in others such as the real world.
In future work, we will explore iterative algorithms which alternate training the policy and sampling state pairs for ranking.

\section*{Acknowledgments}
This work is sponsored by Honda Research 58629.1.1012949, AFOSR FA9550-18-1-0251 and DARPA  FA8750-23-2-1015. The authors would also like to thank Dr. Woojun Kim for his assistance during discussion of this research.

\bibliography{custom}

\newpage
\appendix

\section{Theoretical Proof: Inconsistent rankings lead to uninformative rewards}
\label{formal_proof}

\begin{lemma}
    In the scope of RL based on LLM feedback, the confidence-based preference loss is equivalent to the standard preference loss used by state-score model training over multi-query ranking datasets.
\end{lemma}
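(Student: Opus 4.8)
\section*{Proof proposal}

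The plan is to show that the two losses coincide term-by-term once the outer expectation over the multi-query dataset is unpacked into a sum over distinct state pairs together with an inner average over the repeated queries. First I would make precise what $\mathbb{E}_{N_{query}}$ denotes: in the multi-query setting each sampled pair $(s_a, s_b)$ is submitted to the LLM $N_{query}(s_a, s_b)$ times, producing a collection of labels $y^{(1)}, \dots, y^{(N_{query})}$. The standard preference loss of Eq.~\ref{eq:loss}, applied to this enlarged dataset, therefore contributes one term per query, so that each distinct pair appears $N_{query}(s_a, s_b)$ times inside the empirical average.

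The central observation I would exploit is that the log-probability factors $\log P_\psi[s_a \succ s_b]$ and $\log P_\psi[s_b \succ s_a]$ depend only on the two states and on the model parameters $\psi$, and not on which query produced a given label. Hence, when I group the inner sum over the $N_{query}$ queries of a fixed pair, each log-probability can be pulled out, leaving behind $\sum_i \mathbb{I}\{y^{(i)} = (s_a \succ s_b)\} = N(s_a \succ s_b)$ and the analogous count $N(s_b \succ s_a)$ for the reverse ordering. Dividing by $N_{query}(s_a, s_b)$ converts these counts into exactly the empirical frequencies $conf\{y = (s_a \succ s_b)\}$ and $conf\{y = (s_b \succ s_a)\}$ of the confidence definition, which is precisely the statement $\mathbb{E}_{N_{query}}[\mathbb{I}\{y = (s_a \succ s_b)\}] = conf\{y = (s_a \succ s_b)\}$. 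Substituting the Bradley-Terry identity $P_\psi[s_a \succ s_b] = sigmoid(\sigma_\psi(s_a) - \sigma_\psi(s_b))$ and its counterpart then recovers the confidence-weighted form of Eq.~\ref{eq:conf_rlhf}, closing the equivalence.

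The bulk of the argument is algebraic bookkeeping rather than deep analysis, so I expect the only delicate step to be the interchange that carries $\mathbb{E}_{N_{query}}$ inside past the log-probability terms. Justifying this rests on two facts I would state explicitly: the query-independence of $P_\psi$, which lets the expectation act solely on the indicator random variables, and the treatment of the two orderings as complementary events, so that their confidences sum to one and no cross terms are lost. Once this factoring is established, the remaining substitution of the Bradley-Terry expression and the final rearrangement into the weighted-logistic form are immediate.
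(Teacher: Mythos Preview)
Your proposal is correct and follows essentially the same route as the paper: both arguments replace the inner expectation over repeated queries by the empirical frequency of each ordering, using that the log-probability terms are query-independent so the indicator averages collapse to the confidences. Your write-up is in fact more careful than the paper's own proof, which simply juxtaposes the two displayed forms of the loss without spelling out the factoring step you describe.
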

\begin{proof}
    Given that the the confidence of ranking $s_a$ higher than $s_b$, $conf\{y = (s_a \succ s_b)\}$, is defined as $\frac{N(s_a \succ s_b)}{N_{query}(s_a, s_b)}$. where $N(s_a \succ s_b)$ denotes the number of times LLM ranks $s_a$ higher than $s_b$, and $N_{query}(s_a, s_b)$ denotes the total number of queries on $s_a$ and $s_b$.\\\\
    The standard preference loss over multiple-query ranking dataset $\mathcal{D}$ can be written as
    \begin{equation}
    \begin{aligned}
        &\mathcal{L}_\mathcal{D} = -\mathbb{E}_{(s_a, s_b, y) \sim \mathcal{D}} \bigg[ \mathbb{E}_{N_{query}} \Big[ \mathbb{I}\{y = (s_a \succ s_b)\} \\& \log (sigmoid(\sigma_\psi(s_a) - \sigma_\psi(s_b))) + \\& \mathbb{I}\{y = (s_b \succ s_a)\} \\& \log (sigmoid(\sigma_\psi(s_b) - \sigma_\psi(s_a))) \Big] \bigg]\\
        &= -\mathbb{E}_{(s_a, s_b, y) \sim \mathcal{D}} \bigg[ \\& conf\{y = (s_a \succ s_b)\} \\& \log (sigmoid(\sigma_\psi(s_a) - \sigma_\psi(s_b))) + \\& 
        conf\{y = (s_b \succ s_a)\} \\& \log (sigmoid(\sigma_\psi(s_b) - \sigma_\psi(s_a))) \bigg].
    \end{aligned}
    \label{eq:conf_rlhf}
    \end{equation}
\end{proof}

\begin{theorem}
 As inconsistency of a ranking over two states increases, the scores of these two states converge to the same value.  
\end{theorem}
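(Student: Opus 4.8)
The plan is to minimize the confidence-weighted loss from Eq.~\ref{eq:conf_rlhf} with respect to the score difference on a single ranked pair, and show that the unique minimizer collapses to zero precisely when the ranking becomes maximally inconsistent. First I would isolate the contribution of a fixed pair $(s_a, s_b)$ to the loss and introduce the shorthand $\delta = \sigma_\psi(s_a) - \sigma_\psi(s_b)$ for the score difference and $p = conf\{y = (s_a \succ s_b)\}$ for the confidence, so that $conf\{y = (s_b \succ s_a)\} = 1 - p$ since the two confidences partition the query outcomes. The per-pair objective then reads
\begin{equation}
\ell(\delta) = -\,p \,\log\bigl(sigmoid(\delta)\bigr) - (1-p)\,\log\bigl(sigmoid(-\delta)\bigr),
\end{equation}
which is exactly the binary cross-entropy between the Bernoulli target $p$ and the sigmoid-predicted probability.

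Next I would establish that $\ell$ is strictly convex in $\delta$ and compute its stationary point. Differentiating and using the identity $\tfrac{d}{d\delta}\log(sigmoid(\delta)) = sigmoid(-\delta)$, the first-order condition $\ell'(\delta) = 0$ reduces to $sigmoid(\delta) = p$, i.e.\ the optimal score difference is the logit $\delta^\star = \log\!\frac{p}{1-p}$. Strict convexity (the second derivative equals $sigmoid(\delta)\,sigmoid(-\delta) > 0$) guarantees this minimizer is unique, so the learned scores must satisfy $\sigma_\psi(s_a) - \sigma_\psi(s_b) = \log\!\frac{p}{1-p}$ at optimality.

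I would then parametrize inconsistency explicitly. Maximal inconsistency means the LLM ranks $s_a$ above $s_b$ as often as the reverse, i.e.\ $p \to \tfrac{1}{2}$; quantifying inconsistency by $|p - \tfrac12|$ (equivalently, by $|conf\{y=(s_a\succ s_b)\} - conf\{y=(s_b\succ s_a)\}| \to 0$), I would show that as $p \to \tfrac12$ the logit $\delta^\star = \log\!\frac{p}{1-p} \to 0$, and moreover that $\delta^\star$ is strictly monotone in $p$, so larger inconsistency forces a strictly smaller $|\delta^\star|$. At $p = \tfrac12$ exactly we get $\delta^\star = 0$, i.e.\ $\sigma_\psi(s_a) = \sigma_\psi(s_b)$, which is the claimed convergence to a common value and which by Eq.~\ref{eq:rf} yields an uninformative (zero) potential-based reward.

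The main obstacle is that the argument above controls only the \emph{difference} $\sigma_\psi(s_a) - \sigma_\psi(s_b)$ for an isolated pair, whereas the theorem concerns the scores learned by the global model $\sigma_\psi$ trained jointly over the whole dataset $\mathcal{D}$, where each state typically appears in many pairs with coupled constraints. The careful part of the proof is therefore arguing that the per-pair logit condition transfers to the joint minimizer: I would either assume the score function is expressive enough to realize the per-pair optima simultaneously (so the global loss decomposes and each pairwise difference independently attains its logit value), or restrict attention to the affected pair and hold the remaining scores fixed, invoking strict convexity of the full loss to conclude that driving $p\to\tfrac12$ drives that pair's optimal score gap to zero. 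Handling the interaction between overlapping pairs cleanly, and stating the expressivity assumption under which the collapse is exact rather than approximate, is where the real work lies.
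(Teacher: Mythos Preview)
Your proposal is correct and follows the same decomposition as the paper: isolate the per-pair contribution to the loss, use $conf\{y=(s_b\succ s_a)\}=1-p$, and minimize in the score difference. The technical execution differs slightly. You compute the exact minimizer $\delta^\star=\log\tfrac{p}{1-p}$ via the first-order condition and strict convexity, then let $p\to\tfrac12$. The paper instead takes the limit $p\to\tfrac12$ first, obtaining the symmetric term $-\tfrac{1}{2|\mathcal{D}|}\log\bigl(p_{0,1}(1-p_{0,1})\bigr)$ with $p_{0,1}=sigmoid(\delta)$, and then invokes $x(1-x)\le\tfrac14$ to conclude the lower bound is attained iff $p_{0,1}=\tfrac12$, i.e.\ $\delta=0$. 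Your route yields a bit more: the closed-form $\delta^\star$ for every $p$ gives the monotone dependence of $|\delta^\star|$ on $|p-\tfrac12|$, which matches the ``as inconsistency increases'' phrasing of the theorem more literally than the paper's limit-only argument. On the global-versus-local issue you flag as the main obstacle, the paper simply writes $\mathcal{L}_\mathcal{D}=(\text{pair term})+\mathcal{L}_{\mathcal{D}\setminus\{s_0,s_1\}}$ and minimizes the first summand in isolation, tacitly making exactly the separability/expressivity assumption you spell out; you are being more explicit about this gap than the paper itself is.
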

\begin{proof}
    Based on Equation \ref{eq:conf_rlhf},
    \begin{equation}
    \begin{aligned}
    \mathcal{L}_\mathcal{D} &= -\mathbb{E}_{(s_a, s_b, y)\sim \mathcal{D}} \bigg[ conf\{y = (s_a \succ s_b)\} \\& \log (sigmoid(\sigma_\psi(s_a) - \sigma_\psi(s_b))) + \\& (1 - conf\{y = (s_a \succ s_b)\}) \\& \log (1 - sigmoid(\sigma_\psi(s_a) - \sigma_\psi(s_b))) \bigg].
    \end{aligned}
    \end{equation}

    Take an arbitrary state pair $(s_0, s_1)$ from $\mathcal{D}$. As inconsistency of the ranking over $s_0$ and $s_1$ increases, $conf\{y = (s_0 \succ s_1)\} \to 0.5$. Denote $sigmoid(\sigma_\psi(s_0) - \sigma_\psi(s_1))$ as $p_{0,1}$, $\mathcal{L}$ over other states in $\mathcal{D}$ without $s_0, s_1$ as $\mathcal{L}_{\mathcal{D} \setminus \{s_0, s_1\}}$,
    \begin{equation}
    \begin{aligned}
    &\lim_{conf\{y = (s_0 \succ s_1)\} \to 0.5}\mathcal{L}_\mathcal{D} \\&\to -\frac{1}{2|\mathcal{D}|} \log (p_{0,1}(1 - p_{0,1})) + \mathcal{L}_{\mathcal{D} \setminus \{s_0, s_1\}} \\& \ge \frac{1}{|\mathcal{D}|} \log 2 + \mathcal{L}_{\mathcal{D} \setminus \{s_0, s_1\}}.
    \end{aligned}
    \end{equation}\\ If and only if $p_{0,1} \to \frac{1}{2}$, where $\sigma_\psi(s_0) - \sigma_\psi(s_1) \to 0$, 
    $\lim_{conf\{y = (s_0 \succ s_1)\} \to 0.5}\mathcal{L}_\mathcal{D} \to \frac{1}{|\mathcal{D}|} \log 2 + \mathcal{L}_{\mathcal{D} \setminus \{s_0, s_1\}}$, reaching the lower bound. \\
    
    Therefore, when training the state-score model with this loss $\mathcal{L}$, the scores of any two states whose ranking confidence is close to 50\% will be pushed to the same value.
\end{proof}

\section{Scoring-Model Training Datasets}
To train the scoring model, we randomly sample sequential state pairs and collect LLM ranking results on them, assembling all the data into a training dataset. The training data for all six environments can be accessed here: \href{https://drive.google.com/drive/folders/1odybuKPxqLzz_Y9Blw09V3TbXU4ELFjX?usp=drive_link}{Scoring-Model Training Data}. The details of collection process are as follows.

\subsection{LLM Preference Generating Process}
The LLM does pairwise state ranking in this work. We follow the methodology described in \cite{lee2023rlaif}, where the LLM prompt consists of four parts:

\begin{enumerate}
    \item \textbf{Preamble:} A description of the environment, task, and ranking criteria.
    \item \textbf{Few-shot exemplar:} Pairwise state-ranking example, showcasing the chain of thought on ranking according to given environment conditions and state evaluation criteria.
    \item \textbf{Sample to annotate:} The pair of specific states a and b, described with natural language.
    \item \textbf{Ending:} Ending text to prompt a preferred response as ranking.
\end{enumerate}

In the generated response, the LLM determines the ranking based on the specified criteria between two sequential states and outputs either ‘Yes’ (a is ranked higher than b) or ‘No’ (b is ranked higher than a). The following is an example prompt for the Inverted Pendulum environment:

\begin{tcolorbox}[colback=white, colframe=black, width=\columnwidth, arc=0mm, auto outer arc,
                  boxrule=0.5mm, fonttitle=\bfseries, title=Preamble]
                  
You are in an environment, which involves a cart that can move linearly, with a pole fixed on it at one end and having another end free. The cart can be pushed left or right, and the goal is to balance the pole on the top of the cart by applying forces on the cart.

\end{tcolorbox}

\begin{tcolorbox}[colback=white, colframe=black, width=\columnwidth, arc=0mm, auto outer arc,
                  boxrule=0.5mm, fonttitle=\bfseries, title=Few-shot exemplar]
Q:\\
State[a]:\\
The pole leans to the left by 0.1 radians with a leftward velocity.
\\

State[b]:\\
The pole stands vertically with a rightward velocity.
\\

Is the transition from State[a] to State[b] a good transition?
\\

A:\\
Yes, the pole currently stands vertically, so it has been balanced. 
Therefore, the answer is yes.
\end{tcolorbox}

\begin{tcolorbox}[colback=white, colframe=black, width=\columnwidth, arc=0mm, auto outer arc,
                  boxrule=0.5mm, fonttitle=\bfseries, title=Sample to annotate]
Q:\\
State[0]:\\
The pole leans to the right by 0.6 radians with a leftward velocity.
\\

State[1]:\\
The pole leans to the right by 0.1 radians with a rightward velocity.
\end{tcolorbox}

\begin{tcolorbox}[colback=white, colframe=black, width=\columnwidth, arc=0mm, auto outer arc,
                  boxrule=0.5mm, fonttitle=\bfseries, title=Ending]
Is the transition from State[0] to State[1] a good transition?
\end{tcolorbox}

Prompts for other environments can be accessed here: \href{https://drive.google.com/drive/folders/1-2G4X_KzcPzswmmEM21Wl892_i5EkcC-?usp=sharing}{Prompts for all 6 environments}. To maintain consistent settings across all experiments and eliminate the influence of irrelevant variables, we use the same prompts for all LLM models.

\section{Experiment Details for Reproducibility}

\subsection{Model Architecture} 

{\bf Grid World} The policy model for all scenarios contains separate actor and critic networks, both with 3 convolutional layers followed by 1 fully connected layer mapping the flattened vector to the output vector. The convolutional layers consist of 16 $2\times2$ filters, followed by $2\times2$ pooling, then 32  $2\times2$ filters, and finally 64  $2\times2$ filters. Scoring model architectures for each scenario are shown in Table \ref{tab:rm_grid}.

\begin{table}[htbp]
\setlength{\tabcolsep}{5pt}
\footnotesize
\centering
{\renewcommand{\arraystretch}{1.2}
\begin{tabular}{lccc}
\toprule
 & NoLock & Lock & MultiLock \\ \hline
Conv & \multicolumn{2}{c}{\begin{tabular}[c]{@{}c@{}}conv 16, (2,2),\\ pool (2,2),\\ conv 32, (2,2),\\ conv 64, (2,2),\\ conv 128, (2,2)\end{tabular}} & \begin{tabular}[c]{@{}c@{}}conv 16, (2,2),\\ pool (2,2),\\ conv 32, (2,2), \\ conv 64, (2,2) \\ \end{tabular} \\ \hline
FC hidden & \multicolumn{2}{c}{\begin{tabular}[c]{@{}c@{}} 256, 128, \\ 64, 32, 16 \\ \end{tabular}} & \begin{tabular}[c]{@{}c@{}}512, 256, \\128, 64, 16 \\ \end{tabular} \\
\bottomrule
\end{tabular}
}
\caption{Scoring model architecture for Grid World scenarios. The number of output channels and kernal size is given for each convolutional layer. The number of nodes for each fully connected hidden layer are given.}
\label{tab:rm_grid}
\end{table}

{\bf MuJoCo} Policy model follows \citet{schulman2017proximal}, where both actor and critic networks have a fully connected network using a hidden layer with 64 nodes. Distinct scoring model architectures are used in each scenario, shown in Table \ref{tab:rm_mujoco}.
\begin{table}[htbp]
\setlength{\tabcolsep}{5pt}
\footnotesize
\centering
{\renewcommand{\arraystretch}{1.2}
\begin{tabular}{lccc}
\toprule
 & Reacher & Inverted Pendulum & Hopper \\ \hline
FC hidden & 128 & 128 & 128, 64 \\ \bottomrule
\end{tabular}
}
\caption{Scoring model architecture for MuJoCo scenarios. The number of nodes for each fully connected hidden layer are given.}
\label{tab:rm_mujoco}
\end{table}

\subsection{Hyperparameters}
The hyperparameters of training scoring models and PPO policies were tuned manually. The details are recorded in Table \ref{tab:ppo_hp_grid}, \ref{tab:rm_hp_grid}, \ref{ppo_hp_mujoco}, \ref{tab:rm_hp_mujoco}.

\begin{table}[htbp]
\setlength{\tabcolsep}{5pt}
\footnotesize
\centering
{\renewcommand{\arraystretch}{1.2}
\begin{tabular}{cccc}
\toprule
Hyperparameter & NoLock & Lock & MultiLock \\ \hline
Learning Rate & \begin{tabular}[c]{@{}c@{}}{[}0.0028, 150000{]}\\ {[}0.00000001, 180000{]}\end{tabular} & \begin{tabular}[c]{@{}c@{}}{[}0.008, 100000{]}\\ {[}0.001, 127000{]},\\ {[}0.000005, 180000{]}\end{tabular} & 0.0001 \\
Batch Size & 1024 & 2048 &2048 \\
Num. SGD Epochs & 4 & 4 & 4 \\
Minibatch Size & 128 & 128 & 128 \\
Clipping Prameter & 0.2 & 0.2 & 0.2 \\
VF Clip Parameter & 10.0 & 10.0 & 10.0 \\
VF Coeff. & 0.5 & 0.5 & 0.5 \\
KL Coeff. & 0.5 & 0.5 & 0.5 \\
Entropy Coeff. & 0.01 & 0.01 & 0.01 \\
GAE & 0.8 &  0.8 & 0.8 \\
Discount & 0.99 & 0.99 & 0.99 \\
\bottomrule
\end{tabular}
}
\caption{PPO hyperparameters for Grid World scenarios.}
\label{tab:ppo_hp_grid}
\end{table}

\begin{table}[htbp]
\setlength{\tabcolsep}{4pt}
\footnotesize
\centering
{\renewcommand{\arraystretch}{1.2}
    \begin{tabular}{cccc}
    \toprule
Hyperparameter & NoLock & Lock & MultiLock \\ \hline
SM. LR. Schedule & \begin{tabular}[c]{@{}c@{}}{[}0.004, 17{]}\\ {[}0.0001, 45{]},\\ {[}0.0000001, 250{]}\end{tabular} & \begin{tabular}[c]{@{}c@{}}{[}0.000000001, 20{]}\\ {[}0.00004, 45{]},\\ {[}0.0000001, 250{]}\end{tabular} & \begin{tabular}[c]{@{}c@{}}{[}0.000004, 20{]}, \\ {[}0.000004, 100{]}, \\ {[}0.0000001, 250{]}\end{tabular} \\

SM. Batch Size & 32 & 16 & 64 \\

SM. Epochs & 200 & 120 & 250 \\ \bottomrule
\end{tabular}
}
    \caption{Scoring model training hyperparameters for Grid World scenarios. Learning rate schedule is presented as the learning rate value along with the corresponding final epoch it is applied to.}
    \label{tab:rm_hp_grid}
\end{table}

\begin{table}[htbp]
\setlength{\tabcolsep}{5pt}
\footnotesize
\centering
{\renewcommand{\arraystretch}{1.2}
\begin{tabular}{cc}
\toprule
Hyperparameter & Value \\ \hline
Learning Rate & 0.0003 \\
Batch Size & 2048 \\
Num. SGD Epochs & 10 \\
Minibatch Size & 64 \\
Clipping Prameter & 0.2 \\
VF Clip Parameter & 10 \\
VF Coeff. & 1 \\
KL Coeff. & 0.2 \\
Entropy Coeff. & 0 \\
GAE & 0.95 \\
Discount & 0.99 \\
\bottomrule
\end{tabular}
}
\caption{PPO hyperparameters for all 3 MuJoCo environments}
\label{ppo_hp_mujoco}
\end{table}

\begin{table}[htbp]
\setlength{\tabcolsep}{4pt}
\footnotesize
\centering
{\renewcommand{\arraystretch}{1.2}
    \begin{tabular}{cccc}
    \toprule
Hyperparameter & Pendulum & Reacher & Hopper \\ \hline
SM. LR. Schedule & \begin{tabular}[c]{@{}c@{}}{[}0.000004, 20{]}\\ {[}0.00002, 40{]},\\ {[}0.000008, 50{]}\end{tabular} & \begin{tabular}[c]{@{}c@{}}{[}0.000004, 20{]}\\ {[}0.00002, 120{]},\\ {[}0.000008, 300{]}\end{tabular} & \begin{tabular}[c]{@{}c@{}}{[}0.000004, 50{]}, \\ {[}0.00002, 70{]}\end{tabular} \\

SM. Batch Size & 16 & 16 & 16 \\

SM. Epochs & 50 & 300 & 70 \\ \bottomrule
\end{tabular}
}
    \caption{Scoring model training hyperparameters for Mujoco scenarios. Learning rate schedule is presented as the learning rate value along with the corresponding final epoch it is applied to.}
    \label{tab:rm_hp_mujoco}
\end{table}

\end{document}